\newtheorem{theorem}{Theorem}
\newtheorem{corollary}[theorem]{Corollary}
\newtheorem{proposition}[theorem]{Proposition}
\newcommand{\mbf}[1]{\mathbf{#1}}
\newcommand{\pow}[1]{\text{\bf pow}}
\title{On the generalization of Tanimoto-type kernels to real valued
functions}
\author{
  Sandor Szedmak \\ 
  Department of Computer Science\\
  Aalto University\\
  Espoo, Finland \\
 \texttt{sandor.szedmak@aalto.fi} \\
  \And
  Eric Bach \\ 
  Department of Computer Science \\
  Aalto University \\
  Espoo, Finland \\
  \texttt{eric.bach@aalto.fi} \\
}
\begin{document}

\maketitle

\begin{abstract}
The Tanimoto kernel (Jaccard index) is a well known tool to describe
the similarity between sets of binary attributes. It has been extended to
the case when the attributes are nonnegative real values. This paper introduces
a more general Tanimoto kernel formulation which allows to measure the 
similarity of arbitrary real-valued functions. This extension is
constructed by unifying the representation of the attributes via
properly chosen sets. After deriving the general form of the kernel,
explicit feature representation is extracted from the kernel function,
and a simply way of including general kernels into the Tanimoto kernel
is shown. Finally, the kernel is also expressed as a  quotient of
piecewise linear functions, and a smooth approximation is provided.        
\end{abstract}


\section{Introduction}

In a broad range of machine learning, pattern recognition and data
mining problems the data sources are given as sets of simple, mostly
binary attributes. These sets are generally represented by indicator 
functions. An element of the feature set corresponding to an attribute is
given by a class, and that element expresses that the data object is a
member of that class or not. Using this data representation, we can derive
similarity or dissimilarity measures between objects. Several of those
similarity measures can be formulated as positive definite inner
products, thus they can serve as kernel functions of a Reproducing Kernel
Hilbert space, \cite{Shawe-Taylor2004}. A well known instance
of these type of measures is the Jaccard index, which was introduced by  
\cite{doi:10.1111/j.1469-8137.1912.tb05611.x}. That index is a
normalized measure, thus decreases the effect 
caused by the different sizes of the sets, which otherwise could distort
the comparison. The Jaccard index 
has been extended beyond the case of the indicator function represented
sets by allowing nonnegative, real-valued feature vectors. This extension is
generally referred to as weighted Jaccard index \cite{Ioffe36928}. 
When the Jaccard index is used as a kernel function then it is generally
called as Tanimoto kernel honoring its first application by
\cite{nla.cat-vn1717218}. The kernel derived from the weighted Jaccard index is
also referred to as MinMax kernel, \cite{Ioffe36928}. 

These type of similarity measures, or kernels, are extensively used in
bio- and chemoinformatics. One of the most general application relates
to the representation of molecules which uses, so called, molecular
fingerprints. 
These fingerprints enumerate the occurrences of the characteristic
substructures in the molecules. We can mention here only a few but
illustrative examples of the applications: \cite{Hall1995}, \cite{butina_ci9803381}, \cite{doi:10.1002/9780470116449.ch6},
\cite{RALAIVOLA20051093}, \cite{Geppert_ci900419k}, \cite{10.5555/1893126}, \cite{LAVECCHIA2015318}
and \cite{Bach2018}.

The purpose of this paper is to extend the known Tanimoto kernels
(Jaccard indexes) to the case where the objects are represented by arbitrary
real-valued vectors or functions. We present the  derivation of the generalized
Tanimoto kernel and its relationship to the previously known
variants. Figure~\ref{fi_generalization_summary} illustrates the gradual   
extension of the application domain of the Tanimoto kernel.

\begin{figure}
  \begin{center}
    \includegraphics[width=3in,height=2.4in]{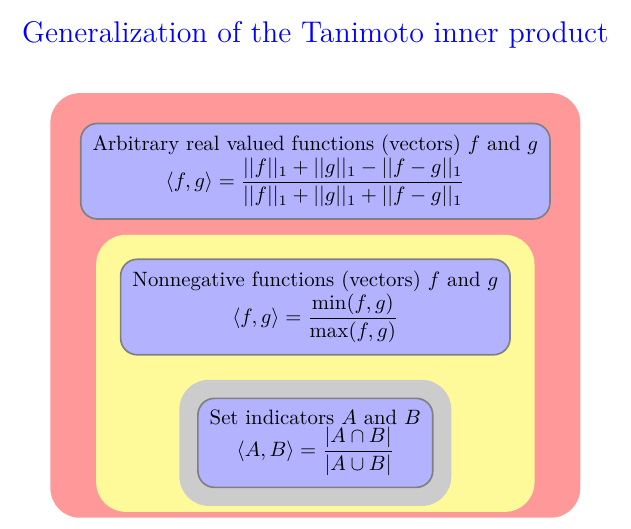}
  \end{center}
  \caption{The summary of the levels generalizing the inner product
  defining the Tanimoto kernel.}
\label{fi_generalization_summary}
\end{figure}

\subsection{Notations}
$\mathbb{R}_{+}$ denotes the set of nonnegative real numbers.  
Let $S$ be a set, and its power set is denoted by $2^{S}$. $|S|$ is 
the size of the set $S$. In this paper we assume that $|S|$ is finite. The inner
product between two functions $f,g:\mathcal{D}\mapsto\mathbb{R}$ with 
respect to a finite measure $\mu$ defined on the domain $\mathcal{D}$
is given by $\int \limits_{x\in \mathcal{D}} f(x)g(x) d\mu(x)$. In the sequel
we assume that for every function $f$ mentioned $|f|$ is bounded. The
finite dimensional vectors are taken as piecewise constant functions.      
   
The $L_1$ norm of a function $f:\mathcal{D}\rightarrow \mathbb{R}$ is equal to
$\|f\|_1=\int \limits_{x \in \mathcal{D}}|f(x)|dx$ assuming that the
integral is finite. The 
$L_1$ norm can be generalized by taking the integral with respect to the
a nonnegative, finite measure $\mu$, i.e. $\|f\|_{1_\mu}=\int \limits_{x\in
\mathcal{D}}|f(x)|d\mu(x)$.

\section{Kernels on sets}

Let $\mathcal{D}$ be a set. A set system $\mathscr{X}$ is defined as a subset of
the power set $2^{\mathcal{D}}$, i.e. a set of subsets of $\mathcal{D}$.
We assume that $\mathscr{X}$ is a set algebra,
thus it is closed on countable infinite intersections and unions,
namely for finite examples all $A\in \mathscr{X}$ and $B\in
\mathscr{X}$ , $A \cap B\in \mathscr{X}$ and
$A \cup B\in \mathscr{X}$. Furthermore for every
subset $S\in \mathscr{X}$ $\mathcal{D}\setminus
S\in \mathscr{X}$ holds. The set $\mathcal{D}$, consequently the $\emptyset$, is
in $\mathscr{X}$ as well. We might also assume that if $\mathcal{D}$
is finite set then for all $x\in \mathcal{D}$  the singleton set $\{x\}$
is in $\mathscr{X}$ as well. 

Suppose that there is a finite measure $\mu$ defined on $\mathcal{D}$, i.e. 
$\mu(\mathcal{D})<\infty$, and all elements of $\mathscr{X}$ are measurable with
a positive measure except for the $\emptyset$. If $I_{A}$ denotes the indicator
function of a subset $A$ of $\mathcal{D}$ and $A\in\mathscr{X}$ then $\mu(A)=\int \limits_{x\in
\mathcal{D}} I_{A}(x)d\mu(x)$. If $\mathcal{D}$ is a finite set, $\mu$ might be
defined as a counting measure on the singletons. 

We can define a positive definite inner product between sets, and rely
on those inner products to develop kernel based machine learning methods. 
For more background and several other alternative realizations refer
to the book \cite{Shawe-Taylor2004}. 
The examples mentioned here can only demonstrate the broad range of possible
kernels defined on sets. For us the following two cases are important.  
Let $A,B \in \mathscr{X}$ be arbitrary sets, then we can define these kernels: 
\begin{itemize}
\item {\bf Intersection kernel:}
  The basic case assuming a counting measure is given by
  \begin{equation}
    \kappa_{\cap}(A,B)=|A\cap B|,
  \end{equation}
  and the general case can be written in this form 
  \begin{equation}
   \begin{array}{ll}
    \kappa_{\cap}(A,B)_{\mu}& =\mu(A\cap B)
    =\int \limits_{x\in \mathcal{D}} I_{A\cap B}(x)d\mu(x)
    =\int \limits_{x\in \mathcal{D}} I_{A}(x) I_{B}(x) d\mu(x) \\
    & = \braket{I_{A}(x),I_{B}(x)}_{\mu}.
   \end{array}
  \end{equation}
\item {\bf Tanimoto kernel:}
  The counting measure based version is given by
  \begin{equation}
    \kappa_{tanimoto}(A,B)=\dfrac{|A\cap B|}{|A\cup B|},
  \end{equation}  
  and the case relating to a general measure $\mu$ is the following  
  \begin{equation}
    \kappa_{tanimoto}(A,B)_{\mu}=\dfrac{\mu(A\cap B)}{\mu(A\cup B)}
   =\dfrac{\int \limits_{x\in \mathcal{D}} I_{A\cap B}(x)d\mu(x)}{\int \limits_{x\in
   \mathcal{D}} I_{A\cup B}(x)d\mu(x)}. 
  \end{equation}  
\end{itemize}

In the sequel, for sake of simplicity the subscript $_{\mu}$ might be
dropped from the notation of the kernels whenever it is clear which
measure is applied.

\section{Main results}

\label{sec:main_results}

In this Section a summary of the results are presented. We start with
the known extension of the Tanimoto kernel constructed on the
nonnegative functions, i.e. $f:\mathcal{D} \rightarrow \mathbb{R}_{+}$
(c.f. \cite{Ioffe36928}), and subsequently present the generalized
version for the arbitrary real-valued case. The details of the
derivation are unfolded in the following Section~\ref{sec:background}. 

\subsection{Nonnegative functions}

We can represent a nonnegative function $f$ using a set based description. 
Let $A_f$ be a set assigned to $f$, and it is defined by:
\begin{equation}
A_f=\{ (x,y) | x\in \mathcal{D}, y\in \mathbb{R}, 0\le y \le f(x)\}
\subset \mathcal{D} \times \mathbb{R}.
\end{equation}
The set $A_f$ represents the area between the graph of the function and 
the $x$ axes. For a pair of nonnegative functions $f$ and $g$  we can
define the extended intersection and the Tanimoto kernels utilizing the set 
based description.
\begin{proposition}
For any nonnegative functions $f$ and $g$ defined on the domain
$\mathcal{D}$, and for the finite measure $\mu$ we have the
intersection kernel  
\begin{equation}
\label{eq:intersection_kernel}
\kappa_{\cap}(f,g) = \mu(A_f \cap A_g)=\int \limits_{x\in \mathcal{D}}
\min(f(x),g(x))d\mu(x),  
\end{equation} 
and the Tanimoto kernel
\begin{equation}
\label{eq:minmax_kernel}
\kappa(f,g)_{tanimoto} = \dfrac{\mu(A_f \cap A_g)}{\mu(A_f \cup
A_g)} 
=\dfrac{\int \limits_{x\in \mathcal{D}} \min(f(x),g(x))d\mu(x)}{\int \limits_{x\in \mathcal{D}} \max(f(x),g(x))d\mu(x)}.
\end{equation} 
\end{proposition}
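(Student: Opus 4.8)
The plan is to reduce both formulas to the set kernels already established, namely $\kappa_{\cap}(A,B)_{\mu}=\mu(A\cap B)$ and $\kappa_{tanimoto}(A,B)_{\mu}=\mu(A\cap B)/\mu(A\cup B)$, applied to the hypograph sets $A_f$ and $A_g$. Since the Tanimoto formula follows from the intersection formula together with the analogous identity for the union (the denominator), the crux is to evaluate $\mu(A_f\cap A_g)$ and $\mu(A_f\cup A_g)$ explicitly as integrals over $\mathcal{D}$. Once those two pointwise identities are in hand, both equations of the proposition drop out immediately by substitution.

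First I would fix the measure on the product space $\mathcal{D}\times\mathbb{R}$ to be $\mu\times\lambda$, where $\lambda$ is Lebesgue measure on the $y$-axis; this is the natural choice that makes $\mu(A_f)=\int_{\mathcal{D}}f(x)\,d\mu(x)$, i.e.\ the ``area under the graph'' interpretation stated in the text. Next I would compute $A_f\cap A_g$ pointwise in $x$. For a fixed $x$, the fiber of $A_f$ is the segment $\{y:0\le y\le f(x)\}$ and the fiber of $A_g$ is $\{y:0\le y\le g(x)\}$; their intersection is $\{y:0\le y\le \min(f(x),g(x))\}$, whose $\lambda$-length is exactly $\min(f(x),g(x))$. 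Likewise the union of the two fibers is $\{y:0\le y\le \max(f(x),g(x))\}$ with length $\max(f(x),g(x))$, using that both segments share the common lower endpoint $0$ so that no gap appears in the union.

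The key step is then to integrate these fiberwise lengths back over $\mathcal{D}$ by Fubini--Tonelli. Because the integrands are nonnegative and $\mu$ is finite with all functions bounded, Tonelli's theorem applies and lets me write
\begin{equation}
\mu(A_f\cap A_g)=\int_{\mathcal{D}}\lambda\bigl(\{y:0\le y\le\min(f(x),g(x))\}\bigr)\,d\mu(x)=\int_{\mathcal{D}}\min(f(x),g(x))\,d\mu(x),
\end{equation}
and identically $\mu(A_f\cup A_g)=\int_{\mathcal{D}}\max(f(x),g(x))\,d\mu(x)$. Substituting these into the set-level definitions gives \eqref{eq:intersection_kernel} and, as their quotient, \eqref{eq:minmax_kernel}.

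The main obstacle I anticipate is purely measure-theoretic bookkeeping rather than any genuine difficulty: I must confirm that $A_f$ and $A_g$ are product-measurable (so that the fibers are measurable and Tonelli is legitimate) and that the fiber-length function $x\mapsto\min(f(x),g(x))$ is itself $\mu$-measurable and integrable. Measurability of the hypographs follows from measurability of $f$ and $g$, and integrability follows from boundedness of the functions together with $\mu(\mathcal{D})<\infty$; the identities $\min=I_{A_f\cap A_g}$-area and $\max=I_{A_f\cup A_g}$-area are then just the familiar $\min(a,b)+\max(a,b)=a+b$ geometry of nested intervals sharing a left endpoint. Beyond verifying these hypotheses, every step is a routine substitution.
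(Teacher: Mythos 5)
Your proof is correct, but it follows a genuinely different route from the paper. The paper never proves this nonnegative-case proposition directly: it defers to Section~4, where the fully general real-valued case is developed via epigraph/hypograph set algebra ($\mathscr{H}_f \cap \mathscr{H}_g = \mathscr{H}_{\min(f,g)}$, etc.) and the $L_1$-norm identities of Proposition~\ref{prop_tanimoto_general} (using $f+g=\max+\min$ and $|f-g|=\max-\min$ pointwise), and then recovers the MinMax kernel as a corollary by observing that for nonnegative functions the negative-part integrals in the numerator and denominator vanish. You instead work directly: fix the product measure $\mu\times\lambda$ on $\mathcal{D}\times\mathbb{R}$, observe that the fibers of $A_f$ and $A_g$ over each $x$ are the nested intervals $[0,f(x)]$ and $[0,g(x)]$, so their intersection and union have lengths $\min(f(x),g(x))$ and $\max(f(x),g(x))$, and integrate fiberwise by Tonelli. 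What your approach buys is self-containedness and rigor exactly where the paper is implicit: the paper repeatedly equates measures of planar sets with integrals over $\mathcal{D}$ without ever naming the product measure or invoking Fubini--Tonelli, whereas you make the measure on $\mathcal{D}\times\mathbb{R}$, the measurability of the hypograph sets, and the fiberwise computation explicit. What the paper's route buys is generality: the same epigraph/hypograph machinery handles sign-changing functions in one pass, so the nonnegative case comes for free, while your fiber argument as written relies on both fibers sharing the left endpoint $0$ and would need the two-sided set decomposition $A_f = A_f^+ \cup A_f^-$ to extend to arbitrary real-valued $f$.
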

This kernel is generally referred as {\it MinMax kernel}, see for example
\cite{Ioffe36928}.   

\subsection{Arbitrary real-valued functions}
 
Now we move on to the main result of this paper. For the general case
we need to extend the definition of the set  $A_f$ 
assigned to an arbitrary real valued function $f:\mathcal{D}
\rightarrow \mathbb{R}$. To this end we
split the domain $\mathcal{D}$ of $f$ based on the sign of the
function values. Let $f=f^{+}+f^{-}$ where 
\begin{equation}
\begin{array}{ll}
f^{+}(x) =
\left \{
\begin{array}{ll}
f(x) & f(x)\ge 0, \\
0    & f(x) < 0,
\end{array}
\right .
& 
f^{-}(x) =
\left \{
\begin{array}{ll}
-f(x) & f(x) < 0, \\
0    & f(x) \ge 0.
\end{array}
\right .
\end{array}
\end{equation}
Clearly, $f^{+}$ and $(-f)^{-}$ are nonnegative
functions.  
Let 
\begin{equation}
  \label{def:Ap_f}
  \begin{array}{@{}ll@{}}
    A^{+}_{f}&= \{ (x,y) | x\in \mathcal{D}, y\in \mathbb{R}, 0\le y
               \le f^{+}(x)\},
  \end{array}
\end{equation}
and
\begin{equation}
  \label{def:An_f}
  \begin{array}{@{}ll@{}}
    A^{-}_{f}&= \{ (x,y) | x\in \mathcal{D}, y\in \mathbb{R},\ f^{-}(x)
               \le y < 0 \},
  \end{array}
\end{equation}
be the corresponding sets. Note that $A^{+}_f \cap
A^{-}_f=\emptyset$, and we can define $A_f$ as
\begin{equation}
  \label{def:A_f}
    A_{f}=A^{+}_f \cup A^{-}_f. 
\end{equation} 

We can construct the corresponding kernels for the general case.
\begin{proposition}
  Let $f$ and $g$ be any real valued functions on domain $\mathcal{D}$,
  and for the finite measure $\mu$ we have the intersection
  kernel 
\begin{equation}
\label{eq:gen_intersection_kernel}
\renewcommand{\arraystretch}{1.5}
\begin{array}{@{}l@{\:}l@{}}
\kappa_{\cap}(f,g) & = \mu(A_f \cap A_g)
=\frac{1}{2}(\|f\|_{1_\mu}+\|g\|_{1_\mu}-\|f-g\|_{1_\mu})
  \\ 
& = \int \limits_{\min(f(x),g(x))\ge 0} \min(f(x),g(x)) d\mu(x)  
- \int \limits_{\max(f(x),g(x))<0} \max(f(x),g(x))d\mu(x). 
\end{array}
\renewcommand{\arraystretch}{1}
\end{equation} 
and the Tanimoto kernel
\begin{equation}
\label{eq:general_tanimoto_kernel}
\renewcommand{\arraystretch}{3.5}
\begin{array}{@{}l@{\:}l@{}}
\kappa_{tanimoto}(f,g)
& =\dfrac{\mu(A_f \cap A_g)}{\mu(A_f \cup A_g)} 
=\dfrac{\frac{1}{2}(\|f\|_{1_\mu}+\|g\|_{1_\mu}-\|f-g\|_{1_\mu})}
{\frac{1}{2}(\|f\|_{1_\mu}+\|g\|_{1_\mu}+\|f-g\|_{1_\mu})}  \\
& = \dfrac{
\int \limits_{\min(f(x),g(x))\ge 0} 
\min(f(x),g(x)) d\mu(x)  
- \int \limits_{\max(f(x),g(x))<0} 
\max(f(x),g(x))d\mu(x)
}
{\int \limits_{\max(f(x),g(x))\ge 0} 
\max(f(x),g(x)) d\mu(x)  
 -\int \limits_{\min(f(x),g(x))<0} 
\min(f(x),g(x)) d\mu(x)}.  
\end{array}
\renewcommand{\arraystretch}{1}
\end{equation}
\end{proposition}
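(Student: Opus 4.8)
The plan is to reduce the general statement to the nonnegative case already recorded in \eqref{eq:intersection_kernel}. The decisive structural observation is that the two pieces of $A_f$ lie in disjoint half-planes: $A^{+}_f\subset\mathcal{D}\times[0,\infty)$, while $A^{-}_f\subset\mathcal{D}\times(-\infty,0)$ is the region trapped between the negative part of the graph and the axis. Consequently every mixed product such as $A^{+}_f\cap A^{-}_g$ is empty, and both the intersection and the union split as
\[
A_f\cap A_g=(A^{+}_f\cap A^{+}_g)\cup(A^{-}_f\cap A^{-}_g),\quad A_f\cup A_g=(A^{+}_f\cup A^{+}_g)\cup(A^{-}_f\cup A^{-}_g),
\]
where in each case the two displayed pieces are themselves disjoint, so that $\mu$ of the left-hand side is the sum of the $\mu$-measures of the two pieces on the right.

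First I would dispose of the positive pieces. By construction $A^{+}_f$ is exactly the set $A_{f^{+}}$ attached to the nonnegative function $f^{+}$, so \eqref{eq:intersection_kernel} gives $\mu(A^{+}_f\cap A^{+}_g)=\int\min(f^{+},g^{+})\,d\mu$, while the denominator of \eqref{eq:minmax_kernel} gives $\mu(A^{+}_f\cup A^{+}_g)=\int\max(f^{+},g^{+})\,d\mu$. The negative pieces are identical after the measure-preserving reflection $y\mapsto -y$, which carries $A^{-}_f$ onto $A_{f^{-}}$; this yields $\mu(A^{-}_f\cap A^{-}_g)=\int\min(f^{-},g^{-})\,d\mu$ and $\mu(A^{-}_f\cup A^{-}_g)=\int\max(f^{-},g^{-})\,d\mu$. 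Adding the two pieces gives $\mu(A_f\cap A_g)=\int(\min(f^{+},g^{+})+\min(f^{-},g^{-}))\,d\mu$, and the same expression with $\max$ for the union.

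It remains to recast these integrals in the two advertised forms. For the region-restricted form I would use the pointwise identities $\min(f^{+},g^{+})=(\min(f,g))^{+}$ and $\min(f^{-},g^{-})=(-\max(f,g))^{+}$, each verified by a short case split on the signs of $f(x)$ and $g(x)$; they turn the numerator into $\int_{\min(f,g)\ge 0}\min(f,g)\,d\mu-\int_{\max(f,g)<0}\max(f,g)\,d\mu$, and the companion identities $\max(f^{+},g^{+})=(\max(f,g))^{+}$, $\max(f^{-},g^{-})=(-\min(f,g))^{+}$ produce the denominator. For the $L_1$ form I would instead establish the single pointwise identity $\min(f^{+},g^{+})+\min(f^{-},g^{-})=\tfrac12(|f|+|g|-|f-g|)$ by the same four sign cases, using $\min(a,b)=\tfrac12(a+b-|a-b|)$; integrating against $\mu$ yields $\tfrac12(\|f\|_{1_\mu}+\|g\|_{1_\mu}-\|f-g\|_{1_\mu})$, which is the numerator.

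Finally I would assemble the quotient. Since $\mu(A_f)=\int(f^{+}+f^{-})\,d\mu=\|f\|_{1_\mu}$, inclusion--exclusion $\mu(A_f\cup A_g)=\mu(A_f)+\mu(A_g)-\mu(A_f\cap A_g)$ converts the union measure into $\tfrac12(\|f\|_{1_\mu}+\|g\|_{1_\mu}+\|f-g\|_{1_\mu})$, and dividing gives the stated Tanimoto kernel. The one genuinely delicate point is the measure-theoretic bookkeeping in the lower half-plane: $\mu$ on $A_f\cap A_g$ is really the product of $\mu$ with Lebesgue measure along the $y$-axis, so each piece is measured by integrating its vertical thickness over $x$ (a Fubini/Cavalieri step), and in the negative half-plane that thickness equals $-\max(f,g)$ rather than a $\min$. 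Tracking these signs correctly through the reflection is the step where an error is easiest to make; the remainder is the routine four-case verification together with inclusion--exclusion.
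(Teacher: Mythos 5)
Your proof is correct, but it runs in the opposite direction from the paper's. The paper (Section~\ref{sec:background}) never splits $A_f$ into half-planes: it rewrites $A_f=(\mathscr{H}_f\cap\mathscr{E}_0)\cup(\mathscr{E}_f\cap\mathscr{H}_0)$ using epigraphs and hypographs, expands $A_f\cap A_g$ and $A_f\cup A_g$ by set algebra, discards the cross terms because they sit inside the null set $\mathscr{E}_0\cap\mathscr{H}_0$, and so obtains the sign-restricted integral formulas for $\mu(A_f\cap A_g)$ and $\mu(A_f\cup A_g)$ directly; the $L_1$ expressions are then matched to these in Proposition~\ref{prop_tanimoto_general} via the pointwise identities $f+g=\max(f,g)+\min(f,g)$, $|f-g|=\max(f,g)-\min(f,g)$ and a four-way split of $\mathcal{D}$ by the signs of $f$ and $g$; finally the nonnegative case is recovered as a corollary. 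You instead take the nonnegative case \eqref{eq:intersection_kernel}--\eqref{eq:minmax_kernel} as the known input, decompose $A_f=A_f^{+}\cup A_f^{-}$ into the upper and lower half-planes, transport the lower pieces to the nonnegative setting by the measure-preserving reflection $y\mapsto -y$ (correctly, up to the null set $\{y=0\}$), and convert $\int(\min(f^{+},g^{+})+\min(f^{-},g^{-}))\,d\mu$ into both advertised forms with the positive-part identities $\min(f^{+},g^{+})=(\min(f,g))^{+}$, $\min(f^{-},g^{-})=(-\max(f,g))^{+}$ and $|f^{+}-g^{+}|+|f^{-}-g^{-}|=|f-g|$; the union is then dispatched by inclusion--exclusion from $\mu(A_f)=\|f\|_{1_\mu}$ rather than computed afresh. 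Both routes hinge on the same Cavalieri/Fubini step (vertical thickness integrated over $x$) and both leave measurability implicit, so neither is more rigorous on that score. What your route buys: it reuses the weighted-Jaccard result as a black box, and the inclusion--exclusion step eliminates exactly the sign bookkeeping where the paper's own write-up slips (the paper's displayed claim $\|f\|_{1_\mu}+\|g\|_{1_\mu}=\mu(A_f\cup A_g)-\mu(A_f\cap A_g)$ should read $\mu(A_f\cup A_g)+\mu(A_f\cap A_g)$, as its final combination step implicitly requires). What the paper's route buys: it is self-contained --- Proposition~\ref{eq:minmax_kernel} is stated there without proof, so your argument inherits that dependency, though your closing Cavalieri remark shows you could discharge it by the same thickness computation --- and it develops the epigraph/hypograph calculus \eqref{eq:epi_hypo_min_max} that the paper wants on record anyway.
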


\section{Background}

\label{sec:background}

In Section \ref{sec:main_results} to a function $f$ we assign a set
$A_f$ defined on the area between the graph of the function and the
$X$-axis. Here we show a slightly different approach to define the same $A_f$
which allows to reformulate the expression  
(\ref{eq:general_tanimoto_kernel}) of the general Tanimoto kernel.
In the description of the alternative form we exploit the concept of
{\it epigraph} of a function. The epigraph and plays a
central role 
in convex analysis, \cite{Rockafellar1970},  and in the variational
analysis of general functions, and multi-valued mappings,
\cite{Rockafellar1997}.  

\begin{figure}
  \begin{center}
    \includegraphics[width=4in,height=2.8in]{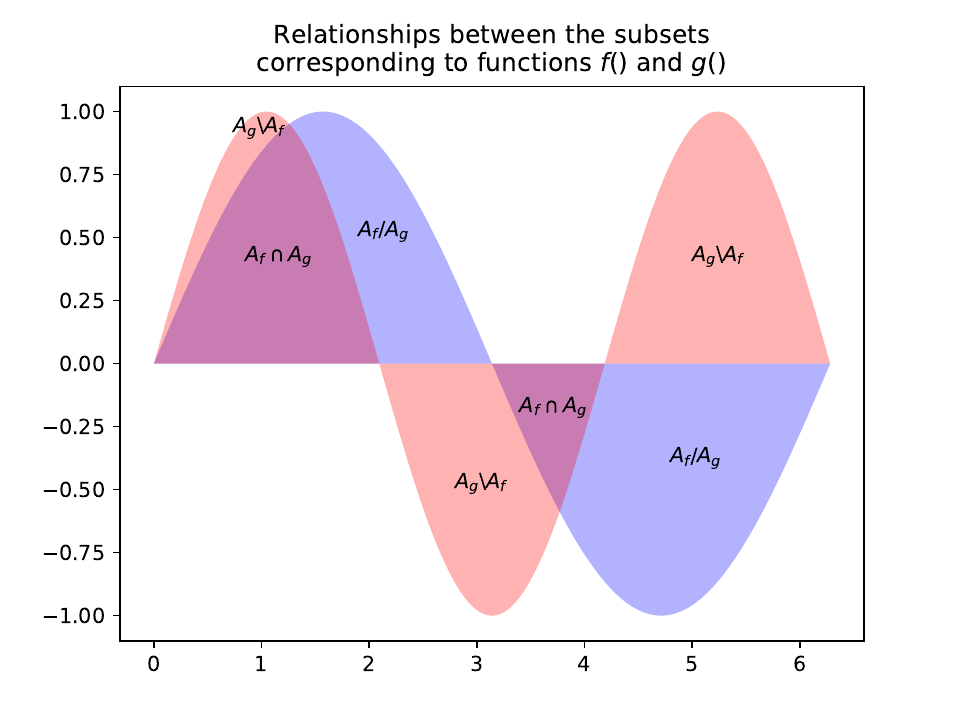}
  \end{center}
  \caption{The interrelations of the areas, $A_f$ and $A_g$, defined
  via the functions $f$ and $g$. }
\label{fig:gen_geometry}
\end{figure}

\begin{figure}
  \begin{center}
    \includegraphics[width=4in,height=2.3in]{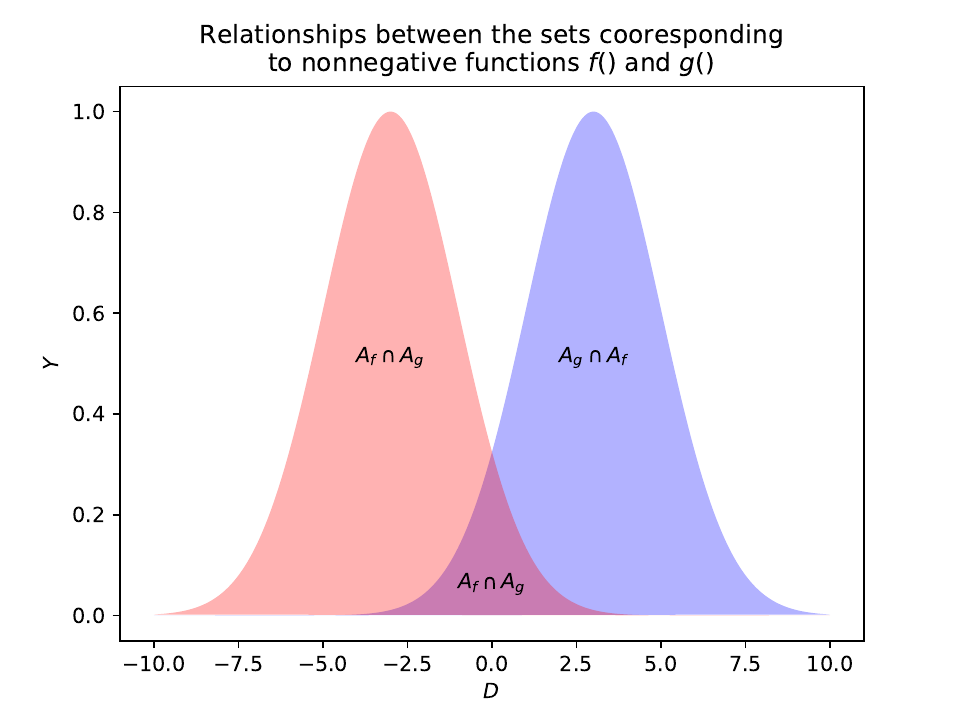}
  \end{center}
  \caption{The interrelations of the areas, $A_f$ and $A_g$, defined
  via the non-negative functions $f$ and $g$. }
\label{fi_nonnegative_areas}
\end{figure}

We define a function in the same way, $f: \mathcal{D} \rightarrow
\mathbb{R}$, as we did earlier. 
The epigraph of $f$ is a set and given by 
\begin{equation}
\mathscr{E}_f= \{ (x,y) | y \ge f(x), x\in \mathcal{D} \}.  
\end{equation} 
The graph of a function is the set 
\begin{equation}
\mathscr{G}_f= \{ (x,y) | y=f(x), x\in \mathcal{D} \}.  
\end{equation}
The hypograph of $f$ of is equal to 
\begin{equation}
\mathscr{H}_f= \{ (x,y) | y \le f(x), x\in \mathcal{D}
\}=\mathscr{G}_{f} \cup \mathscr{E}_{f},  
\end{equation} 
which is a complement of the epigraph which also comprises the graph of the
function.   
For any pair of bounded functions $f$ and $g$ based on the definition of the
epigraph and hypograph we can derive these equations
\begin{equation}
\label{eq:epi_hypo_min_max}
\begin{array}{llll}
\mathscr{H}_f \cap \mathscr{H}_g & = \mathscr{H}_{\min(f,g)}, &
\mathscr{E}_f \cap \mathscr{E}_g & = \mathscr{E}_{\max(f,g)}, \\
\mathscr{H}_f \cup \mathscr{H}_g & = \mathscr{H}_{\max(f,g)}, &
\mathscr{E}_f \cap \mathscr{E}_g & = \mathscr{E}_{\min(f,g)}. \\
\end{array}
\end{equation}

Assuming that the measure $\mu$ is finite, and $f$ is bounded from
above and below, we also have the following
statements. 
\begin{itemize}
\item 
For any $f$ we have 
$\mu_{\mathcal{D}\times \mathbb{R}}(\mathcal{G}_f)=0$. 
\item As a consequence $
\mu_{\mathcal{D}\times \mathbb{R}}(\mathcal{E}_f \cap
\mathcal{H}_f)=0$ 
 holds as well.
\item Another consequence in case of any sets $A,B \subset \mathcal{D} \times
  \mathbb{R}$
\begin{equation}
\label{eq:subset_epi_hypo}
\mu_{\mathcal{D}\times \mathbb{R}}((A \cap B \cap \mathcal{E}_f \cap \mathcal{H}_f))=0
\end{equation} 
 is also true, since $A \cap B \cap \mathcal{E}_f \cap
 \mathcal{H}_f\subseteq \mathcal{E}_f \cap \mathcal{H}_f$. 
\end{itemize}

With these concepts at hand we can redefine $A_f$ as $A_f=(\mathscr{H}_f
\cap \mathscr{E}_0) \cup (\mathscr{E}_f \cap \mathscr{H}_0)$, where
$\mathcal{E}_0$ is the epigraph of the constant, $0$ valued function. 
Note that the redefined $A_f$ covers the same set as earlier. For sake of
simplicity, in the following expression we might drop the intersection
operator $\cap$, thus for any sets $A$ and $B$, $AB$  means $A\cap B$. 
If two functions $f$ and $g$ are given on the same domain, the intersection
of the corresponding sets is 
\begin{equation}
\begin{array}{ll}
A_f \cap A_g & = 
(\mathscr{H}_f\mathscr{E}_0 \cup \mathscr{E}_f\mathscr{H}_0)
(\mathscr{H}_g\mathscr{E}_0 \cup \mathscr{E}_g\mathscr{H}_0)
\\
& = \mathscr{H}_f\mathscr{H}_g\mathscr{E}_0 \cup
  \mathscr{E}_f\mathscr{H}_g\mathscr{H}_0\mathscr{E}_0 \cup
  \mathscr{H}_f\mathscr{E}_g\mathscr{E}_0\mathscr{H}_0 \cup 
\mathscr{E}_f\mathscr{E}_g\mathscr{H}_0.  
\end{array}
\end{equation} 
Exploiting the equality (\ref{eq:subset_epi_hypo}), we can see that the 
terms containing the intersection $\mathscr{E}_0 \cap
\mathscr{H}_0$ have to have $0$ measure with respect to $\mu_{\mathcal{D}
\times \mathbb{R}}$. Hence we can state that 
\begin{equation}
\renewcommand{\arraystretch}{1.5}
\begin{array}{@{}l@{\:}l@{}}
\mu_{\mathcal{D} \times \mathbb{R}}(A_f \cap A_g) 
& = \mu_{\mathcal{D} \times \mathbb{R}}(\mathscr{H}_f\mathscr{H}_g\mathscr{E}_0)
+ \mu_{\mathcal{D} \times
  \mathbb{R}}(\mathscr{E}_f\mathscr{E}_g\mathscr{H}_0) \\
& =\int \limits_{\substack{x \in \mathcal{D} \\ \min(f(x),g(x))\ge 0 }}
  \min(f(x),g(x)) d\mu(x)  
- \int \limits_{\substack{x \in \mathcal{D} \\ \max(f(x),g(x))< 0 }}
  \max(f(x),g(x)) d\mu(x).
\end{array}
\renewcommand{\arraystretch}{1}
\end{equation} 
In case of the union of $A_f$ and $A_g$, we can similarly write  
\begin{equation}
\begin{array}{ll}
A_f \cup A_g & = 
\mathscr{H}_f\mathscr{E}_0 \cup \mathscr{E}_f\mathscr{H}_0 \cup
\mathscr{H}_g\mathscr{E}_0 \cup \mathscr{E}_g\mathscr{H}_0 \\
& = (\mathscr{H}_f \cup \mathscr{H}_g)\mathscr{E}_0 \cup 
  (\mathscr{E}_f \cup \mathscr{E}_g)\mathscr{H}_0.
\end{array}
\end{equation} 
By applying (\ref{eq:subset_epi_hypo}) again, the intersection of
$(\mathscr{H}_f \cup \mathscr{H}_g)\mathscr{E}_0$ and $(\mathscr{E}_f
\cup \mathscr{E}_g)\mathscr{H}_0$ has zero measure with respect to
$\mu_{\mathcal{D} \times \mathbb{R}}$. Therefore we have
\begin{equation}
\renewcommand{\arraystretch}{1.5}
\begin{array}{ll}
\mu_{\mathcal{D} \times \mathbb{R}}(A_f \cup A_g) 
& = \mu_{\mathcal{D} \times \mathbb{R}}((\mathscr{H}_f \cup \mathscr{H}_g)\mathscr{E}_0)
+ \mu_{\mathcal{D} \times \mathbb{R}}((\mathscr{E}_f\cup
  \mathscr{E}_g)\mathscr{H}_0) \\  
& =\int \limits_{\substack{x \in \mathcal{D} \\ \max(f(x),g(x))\ge 0 }}
  \max(f(x),g(x)) d\mu(x)  
- \int \limits_{\substack{x \in \mathcal{D} \\ \min(f(x),g(x))< 0 }}
  \min(f(x),g(x)) d\mu(x).
\end{array}
\renewcommand{\arraystretch}{1}
\end{equation} 

The underlying geometry of the general case is highlighted on Figure
\ref{fig:gen_geometry}. Figure \ref{fi_nonnegative_areas} demonstrates
the simpler geometric setting exploited in the case of nonnegative functions.   

Now we can express the general
Tanimoto kernel by exploiting the equations of
(\ref{eq:epi_hypo_min_max}) connecting the pointwise minimum and
maximum of functions to the intersection of epigraphs and the hypographs.  
\begin{equation}
\begin{array}{ll}
\label{eq:general_minmax_tanimoto_kernel}
\kappa_{tanimoto}(f,g)
 =\dfrac{\mu(A_f \cap A_g)}{\mu(A_f \cup A_g)} \\
\quad =\dfrac{
\int \limits_{\min(f(x),g(x))\ge 0} 
\min(f(x),g(x)) d\mu(x)  
- \int \limits_{\max(f(x),g(x))<0} 
\max(f(x),g(x))d\mu_{\mathcal{D}}(x)
}
{\int \limits_{\max(f(x),g(x))\ge 0} 
\max(f(x),g(x)) d\mu_{\mathcal{D}}(x)  
 -\int \limits_{\min(f(x),g(x))<0} 
\min(f(x),g(x)) d\mu_{\mathcal{D}}(x)}  
\end{array}
\end{equation}
Both in the numerator and the denominator within the second terms we
can use strict inequalities defining the integral domain, since the
integrals of bounded functions on the boundaries of the integral
domains are equal to $0$. 

In the remaining part of this section we show that the generalized
Tanimoto kernel can also be described by the help of $L_1$ norm. 
We have the following statements. 
\begin{proposition}
\label{prop_tanimoto_general}
For any functions $f,g:\mathcal{D} \rightarrow \mathbb{R}$ we have for
the intersection
\begin{equation}
\label{eq:gen_fcapg_L1}
\renewcommand{\arraystretch}{1.5}
\begin{array}{ll}
\frac{1}{2}(||f||_{1_{\mu}}+||g||_{1_{\mu}}-||f-g||_{1_{\mu}}) \\
\qquad = 
\int \limits_{\min(f(x),g(x))\ge 0} 
\min(f(x),g(x)) d\mu_{\mathcal{D}}(x)  
- \int \limits_{\max(f(x),g(x))<0} 
\max(f(x),g(x))d\mu_{\mathcal{D}}(x) \\
\qquad = \mu(A_f \cap A_g),
\end{array}
\renewcommand{\arraystretch}{1}
\end{equation}
and for the union
\begin{equation}
\label{eq:gen_fcupg_L1}
\renewcommand{\arraystretch}{1.5}
\begin{array}{ll}
\frac{1}{2}(||f||_{1_{\mu}}+||g||_{1_{\mu}}+||f-g||_{1_{\mu}}) \\
\qquad = \int \limits_{\max(f(x),g(x))\ge 0} 
\max(f(x),g(x)) d\mu_{\mathcal{D}}(x)  
 -\int \limits_{\min(f(x),g(x))<0} 
\min(f(x),g(x)) d\mu_{\mathcal{D}}(x) \\  
\qquad = \mu(A_f \cup A_g).
\end{array}
\renewcommand{\arraystretch}{1}
\end{equation}
\end{proposition}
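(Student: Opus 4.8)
The plan is to prove the three-way equalities in each display by establishing only the \emph{first} equality in each—namely that the $L_1$-norm expression equals the corresponding min/max integral. The \emph{second} equalities, identifying those integrals with $\mu(A_f\cap A_g)$ and $\mu(A_f\cup A_g)$, have already been derived earlier in this section through the epigraph/hypograph decomposition of $A_f\cap A_g$ and $A_f\cup A_g$, so they may be invoked directly. Both remaining equalities then follow from a single elementary pointwise identity, integrated over $\mathcal{D}$.

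For the intersection I would fix $x$, write $a=f(x)$ and $b=g(x)$, and verify the pointwise identity
\[
\tfrac12\bigl(|a|+|b|-|a-b|\bigr)=
\begin{cases}
\min(a,b), & a\ge 0,\ b\ge 0,\\
-\max(a,b), & a<0,\ b<0,\\
0, & \text{otherwise},
\end{cases}
\]
by checking the three sign configurations (using, e.g., $|a-b|=a-b$ when $a\ge 0>b$). Integrating this identity against $\mu$ and using linearity of the integral produces, on the left, $\tfrac12(\|f\|_{1_\mu}+\|g\|_{1_\mu}-\|f-g\|_{1_\mu})$, and on the right exactly $\int_{\min(f,g)\ge 0}\min(f,g)\,d\mu-\int_{\max(f,g)<0}\max(f,g)\,d\mu$, since the middle (mixed-sign) region contributes zero.

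For the union I would proceed identically with the companion identity
\[
\tfrac12\bigl(|a|+|b|+|a-b|\bigr)=
\begin{cases}
\max(a,b), & a\ge 0,\ b\ge 0,\\
-\min(a,b), & a<0,\ b<0,\\
\max(a,b)-\min(a,b), & \text{otherwise},
\end{cases}
\]
observing that on the mixed region $\max(a,b)\ge 0$ while $\min(a,b)<0$, so its contribution $\max(a,b)-\min(a,b)$ is precisely what the two integrals $\int_{\max(f,g)\ge 0}\max(f,g)\,d\mu$ and $-\int_{\min(f,g)<0}\min(f,g)\,d\mu$ collect there. Integrating yields $\tfrac12(\|f\|_{1_\mu}+\|g\|_{1_\mu}+\|f-g\|_{1_\mu})$ on the left and the claimed min/max integral on the right.

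The only delicate point—and the one place the argument could go wrong—is the bookkeeping at the sign boundaries: the set $\{x:f(x)=0\text{ or }g(x)=0\}$ and the strict-versus-non-strict inequalities delimiting the regions $\{\min(f,g)\ge 0\}$ and $\{\max(f,g)<0\}$. Here I would point out that the pointwise identities above actually hold at \emph{every} point (including when $a$ or $b$ equals $0$), and that wherever region membership is ambiguous the relevant integrand vanishes; hence each integral is independent of how boundary points are assigned, and no measure-zero exception is required. Boundedness of $f$ and $g$ together with finiteness of $\mu$ guarantees all integrals are finite, so the use of linearity is unconditionally justified, completing the proof.
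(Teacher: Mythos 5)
Your proof is correct, and it takes a genuinely different decomposition from the paper's, even though both rest on the same two ingredients: a sign-case analysis of $|a|+|b|\mp|a-b|$, and the epigraph/hypograph identities $\mu(A_f\cap A_g)=\int_{\min(f,g)\ge 0}\min(f,g)\,d\mu-\int_{\max(f,g)<0}\max(f,g)\,d\mu$ and $\mu(A_f\cup A_g)=\int_{\max(f,g)\ge 0}\max(f,g)\,d\mu-\int_{\min(f,g)<0}\min(f,g)\,d\mu$, which are indeed derived in the section preceding the proposition, so invoking them for the second equalities is legitimate (the paper's own proof does the same). The difference is the order of operations. The paper first establishes two intermediate integrated identities, $\|f-g\|_{1_\mu}=\mu(A_f\cup A_g)-\mu(A_f\cap A_g)$ and $\|f\|_{1_\mu}+\|g\|_{1_\mu}=\mu(A_f\cup A_g)+\mu(A_f\cap A_g)$, each obtained by splitting $\mathcal{D}$ into sign regions, rewriting integrands via $\max/\min$, and regrouping four integrals; only then does it form the combinations $\tfrac12(\cdot\mp\cdot)$. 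You combine first and integrate once: a single everywhere-valid pointwise identity per claim, whose right-hand side (written with indicators) is exactly the integrand of the claimed min/max integrals. Your organization is shorter and more robust: the case analysis is localized in one identity, and your observation that the identities hold at every point, so that strict versus non-strict region boundaries cost nothing, disposes of the measure-zero bookkeeping cleanly. It also sidesteps the one spot where the paper's write-up stumbles: the paper's sum-of-norms computation ends with the sign error $\|f\|_{1_\mu}+\|g\|_{1_\mu}=\mu(A_f\cup A_g)-\mu(A_f\cap A_g)$ (it should be $+$), so that its final combination, as literally printed, evaluates to $0$; the intended argument is clear, but your direct route never generates those intermediate signs to get wrong. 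What the paper's route buys in exchange is the two intermediate identities themselves, which have independent interest ($\|f-g\|_{1_\mu}$ is precisely the measure of the symmetric difference of $A_f$ and $A_g$) and make the numerator and denominator of the Tanimoto quotient interpretable at a glance.
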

\begin{proof}

By inspection for any $x\in \mathcal{D}$ we can claim the following identities
\begin{equation}
\begin{array}{lll}
f(x)+g(x)& =\max(f(x),g(x))+\min(f(x),g(x)), \\
f(x)-g(x)& =\max(f(x),g(x))-\min(f(x),g(x)), & \text{if}\ f(x)\ge g(x),  \\
|f(x)-g(x)|& =\max(f(x),g(x))-\min(f(x),g(x)). \\
\end{array}
\end{equation}

The sub-terms of the $L_1$ norm based forms can be expressed by the
help of the minimum and maximum functions. 
\begin{equation}
\renewcommand{\arraystretch}{1.5}
\begin{array}{ll}
||f-g||_{1_{\mu}}& =\int \limits_{x\in \mathcal{D}} |f(x)-g(x)| d\mu(x)
  \\
& = \int \limits_{x\in \mathcal{D}}
  \left(\max(f(x),g(x))-\min(f(x),g(x))\right) d\mu(x)  \\
& = + \int \limits_{\substack{ x\in \mathcal{D} \\ \max(f(x),g(x))\ge 0}} 
    \max(f(x),g(x) d\mu(x) 
   + \int \limits_{\substack{ x\in \mathcal{D} \\ \max(f(x),g(x))< 0}} 
    \max(f(x),g(x) d\mu(x) \\
& \quad - \int \limits_{\substack{ x\in \mathcal{D} \\ \min(f(x),g(x))\ge 0}} 
    \min(f(x),g(x) d\mu(x) 
   - \int \limits_{\substack{ x\in \mathcal{D} \\ \min(f(x),g(x))< 0}} 
    \min(f(x),g(x) d\mu(x) \\
& = + \int \limits_{\substack{ x\in \mathcal{D} \\ \max(f(x),g(x))\ge 0}} 
    \max(f(x),g(x) d\mu(x)
- \int \limits_{\substack{ x\in \mathcal{D} \\ \min(f(x),g(x))< 0}} 
    \min(f(x),g(x) d\mu(x) \\
& \quad - \int \limits_{\substack{ x\in \mathcal{D} \\ \min(f(x),g(x))\ge 0}} 
    \min(f(x),g(x) d\mu(x)
   + \int \limits_{\substack{ x\in \mathcal{D} \\ \max(f(x),g(x))< 0}} 
    \max(f(x),g(x) d\mu(x) \\
& = \mu(A_f \cup A_g) - \mu(A_f \cap A_g)
\end{array}
\renewcommand{\arraystretch}{1}
\end{equation}
The sum of the $L_1$ norms can be expressed by
\begin{equation}
\label{eq:sumofL1norms}
\renewcommand{\arraystretch}{1.5}
\begin{array}{@{}l@{\:}l@{}}
||f||_{1_{\mu}}+||g||_{1_{\mu}}
& =\int \limits_{x\in \mathcal{D}} |f(x)|+|g(x)| d \mu(x) \\
& = + \int \limits_{\substack{x \in \mathcal{D} \\f(x)\ge 0,g(x)\ge 0, }} f(x)+g(x) d\mu(x)
- \int \limits_{\substack{x \in \mathcal{D} \\f(x)<0,g(x)< 0, }} 
  f(x)+g(x) d\mu(x) \\
& \quad +\int \limits_{\substack{x \in \mathcal{D} \\f(x)\ge 0,g(x)< 0, }} f(x)-g(x) d\mu(x)
+ \int \limits_{\substack{x \in \mathcal{D} \\f(x)<0,g(x)\ge 0, }} g(x)-f(x) d\mu(x) \\ 
& = + \int \limits_{\substack{x \in \mathcal{D} \\f(x)\ge 0,g(x)\ge 0,
  }} \max(f(x),g(x))+\min(f(x),g(x)) d\mu(x) \\
& \quad - \int \limits_{\substack{x \in \mathcal{D} \\f(x)<0,g(x)< 0, }} 
  \max(f(x),g(x))+\min(f(x),g(x)) d\mu(x) \\
& \quad +\int \limits_{\substack{x \in \mathcal{D} \\f(x)\ge 0,g(x)<
  0, }} \max(f(x),g(x))-\min(f(x),g(x)) d\mu(x) \\
& \quad + \int \limits_{\substack{x \in \mathcal{D} \\f(x)<0,g(x)\ge
  0, }} \max(f(x),g(x))-\min(f(x),g(x)) d\mu(x). \\
\end{array}
\renewcommand{\arraystretch}{1}
\end{equation}
By applying these identities
\begin{equation}
\renewcommand{\arraystretch}{1.5}
\begin{array}{@{}l@{\:}l@{}}
\{x | \max(f(x),g(x))\ge 0\} =\{x| f(x)\ge 0,g(x)\ge 0\} 
\cup \{x| f(x)\ge 0,g(x)< 0\} \cup \{x| f(x)< 0,g(x)\ge 0\}, \\   
\{x | \max(f(x),g(x))< 0\} =\{x| f(x)< 0,g(x)< 0\}, \\    
\{x | \min(f(x),g(x))\ge 0\} =\{x| f(x)\ge 0,g(x)\ge 0\}, \\ 
\{x | \min(f(x),g(x))< 0\} =\{x| f(x)< 0,g(x)< 0\} 
\cup \{x| f(x)\ge 0,g(x)< 0\} \cup \{x| f(x)< 0,g(x)\ge 0\}, \\   
\end{array}
\renewcommand{\arraystretch}{1}
\end{equation}
we can rearrange the terms in (\ref{eq:sumofL1norms}), which gives us
\begin{equation}
\renewcommand{\arraystretch}{1.5}
\begin{array}{@{}l@{\:}l@{}}
||f||_{1_{\mu}}+||g||_{1_{\mu}} 
& = + \int \limits_{\substack{ x\in \mathcal{D} \\ \max(f(x),g(x))\ge 0}} 
    \max(f(x),g(x) d\mu(x)
- \int \limits_{\substack{ x\in \mathcal{D} \\ \min(f(x),g(x))< 0}} 
    \min(f(x),g(x) d\mu(x) \\
& \quad - \int \limits_{\substack{ x\in \mathcal{D} \\ \min(f(x),g(x))\ge 0}} 
    \min(f(x),g(x) d\mu(x)
   + \int \limits_{\substack{ x\in \mathcal{D} \\ \max(f(x),g(x))< 0}} 
    \max(f(x),g(x) d\mu(x) \\
& = \mu(A_f \cup A_g) - \mu(A_f \cap A_g).
\end{array}
\renewcommand{\arraystretch}{1}
\end{equation}
After combining the sub-terms together, we can arrive at the final
formulas. For the intersection we have  
\begin{equation}
\begin{array}{ll}
\frac{1}{2}(||f||_{1_{\mu}}+||g||_{1_{\mu}}-||f-g||_{1_{\mu}})
& =\frac{1}{2}(\mu(A_f \cup A_g) - \mu(A_f \cap A_g)-\mu(A_f \cup A_g)
  + \mu(A_f \cap A_g)) \\
& =\mu(A_f \cap A_g)),  
\end{array}
\end{equation}
and the union takes this form 
\begin{equation}
\begin{array}{ll}
\frac{1}{2}(||f||_{1_{\mu}}+||g||_{1_{\mu}}+||f-g||_{1_{\mu}})
&=\frac{1}{2}(\mu(A_f
\cup A_g) - \mu(A_f \cap A_g)+\mu(A_f \cup A_g) -\mu(A_f \cap
A_g)) \\
&=\mu(A_f \cup A_g)).   
\end{array}
\end{equation}

\end{proof}

From Proposition \ref{prop_tanimoto_general} we can straightforwardly
derive the MinMax kernel for the nonnegative functions, see in
Proposition \ref{eq:minmax_kernel}. If the functions are nonnengative
then the second terms in both the numerator and the denominator of
(\ref{eq:general_minmax_tanimoto_kernel}) can be dropped. 
\begin{corollary}
For any nonnegative functions $f,g:\mathcal{D} \rightarrow \mathbb{R}$
the generalized Tanimoto kernel takes the following form
\begin{equation}
\begin{array}{ll}
\label{eq:minmax_tanimoto_kernel}
\kappa_{\cap}(f,g)
 =\mu(A_f \cap A_g) 
\quad =\dfrac{
\int \limits_{x \in \mathcal{D}} 
\min(f(x),g(x)) d\mu(x)  
}
{\int \limits_{x \in \mathcal{D}} 
\max(f(x),g(x)) d\mu_{\mathcal{D}}(x)}  
\end{array}
\end{equation}

\end{corollary}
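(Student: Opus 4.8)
The plan is to specialize the general Tanimoto kernel formula in (\ref{eq:general_minmax_tanimoto_kernel}), which Proposition~\ref{prop_tanimoto_general} has already established, to the case where both $f$ and $g$ are nonnegative. The entire argument rests on a single elementary observation: when $f(x)\ge 0$ and $g(x)\ge 0$ hold for every $x\in\mathcal{D}$, both the pointwise minimum and the pointwise maximum inherit nonnegativity, so that $\min(f(x),g(x))\ge 0$ and $\max(f(x),g(x))\ge 0$ everywhere on $\mathcal{D}$.

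First I would examine the two subtracted (second) terms in (\ref{eq:general_minmax_tanimoto_kernel}). The integral in the numerator is taken over $\{x : \max(f(x),g(x))<0\}$ and the one in the denominator over $\{x : \min(f(x),g(x))<0\}$. By the observation above, each of these sets is empty under the nonnegativity hypothesis, so both integrals vanish identically and the two subtracted terms disappear at once.

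Next I would treat the surviving (first) integrals. The numerator's first integral is over $\{x : \min(f(x),g(x))\ge 0\}$ and the denominator's over $\{x : \max(f(x),g(x))\ge 0\}$; by the same observation each of these domains equals all of $\mathcal{D}$. Hence the numerator collapses to $\int_{x\in\mathcal{D}}\min(f(x),g(x))\,d\mu(x)$ and the denominator to $\int_{x\in\mathcal{D}}\max(f(x),g(x))\,d\mu(x)$, which is exactly the MinMax form of (\ref{eq:minmax_kernel}).

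I do not anticipate a genuine obstacle, since the statement is a direct restriction of the general result to a special case. The only point that requires care is the bookkeeping of integration domains: one must verify that the conditions $\{\,\cdot<0\}$ collapse to the \emph{empty} set (so the corresponding integrals are exactly zero, not merely small) and that the conditions $\{\,\cdot\ge 0\}$ fill out \emph{all} of $\mathcal{D}$, rather than just shrinking or growing. Once these two facts are recorded, the claimed quotient follows by substitution with no further computation.
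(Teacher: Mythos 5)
Your proposal is correct and matches the paper's own argument: the paper likewise derives the corollary by specializing (\ref{eq:general_minmax_tanimoto_kernel}), observing that for nonnegative $f,g$ the second terms in the numerator and denominator can be dropped. Your added bookkeeping---that the sets $\{x:\max(f(x),g(x))<0\}$ and $\{x:\min(f(x),g(x))<0\}$ are empty while the $\{\,\cdot\ge 0\}$ domains fill all of $\mathcal{D}$---is exactly the (implicit) content of the paper's one-line justification, just spelled out more carefully.
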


\section{Explicit feature from Tanimoto kernel}

 
The inner product implied by the Tanimoto kernel can be defined on
any pairs of subsets $A,B\in \mathscr{X}$ by
\begin{equation}
  \braket{A,B}_{\mu}=\dfrac{\mu(A\cap B)} {\mu(A\cup B)}.
\end{equation}
If $A$ or $B$ is empty than the $\braket{A,B}=0$. In the sequel we
assume that $A\cup B\ne \emptyset$. We can also write the inner
product in another form 
\begin{equation}
  \braket{A,B}_{\mu}=\dfrac{\mu(A\cap B)}{\mu(\mathcal{D}\setminus (\overline{A}\cap \overline{B}))}=\dfrac{\mu(A\cap B)}{\mu(\mathcal{D})-\mu(\overline{A}\cap \overline{B})}=\dfrac{\frac{\mu(A\cap B)}{\mu(\mathcal{D})}}{1-\frac{\mu(\overline{A}\cap \overline{B})}{\mu(\mathcal{D})}},
\end{equation}
where for a set $A \in \mathscr{X}$ we have $\overline{A}=\mathcal{D}\setminus
A$. 
Now suppose that $0< \mu(\overline{A}\cap \overline{B}) <
\mu(\mathcal{D})$, i.e. $\emptyset \ne A\cup B \subset \mathcal{D}$,
then  we have $0<\frac{\mu(\overline{A}\cap
\overline{B})}{\mu(\mathcal{D})}<1$. As a consequence we can write the
inner product as a geometric series with initial value
$\frac{\mu(A\cap B)}{\mu(\mathcal{D})}$ and with factor
$\frac{\mu(\overline{A}\cap \overline{B})}{\mu(\mathcal{D})}$, thus
it yields
\begin{equation}
  \label{eq_tanimoto_unfold}
  \begin{array}{ll}
    \braket{A,B}_{\mu}&=\dfrac{\mu(A\cap B)} {\mu(A\cup B)}
  =\dfrac{\mu(A\cap B)}{\mu(\mathcal{D})-\mu(\overline{A}\cap \overline{B})}
  =\dfrac{\frac{\mu(A\cap B)}{\mu(\mathcal{D})}}
   {1-\frac{\mu(\overline{A}\cap \overline{B})}{\mu(\mathcal{D})}} \\
  &=\dfrac{\mu(A\cap B)}{\mu(\mathcal{D})}
    \left(1+\dfrac{\mu(\overline{A}\cap \overline{B})}{\mu(\mathcal{D})}
    + \dots + \left(\dfrac{\mu(\overline{A}\cap \overline{B})}
    {\mu(\mathcal{D})} \right)^k + \dots  \right) \\
  &=\dfrac{\mu(A\cap B)}{\mu(\mathcal{D})}
    \sum \limits_{k=1}^{\infty} \left(\dfrac{\mu(\overline{A}\cap \overline{B})}
    {\mu(\mathcal{D})}\right)^{k-1}. 
  \end{array}
\end{equation}

From the Expression (\ref{eq_tanimoto_unfold}) of the inner product we
can derive an 
explicit feature representation $\phi()$ which maps $\mathscr{X}$ into
a Hilbert space. To this end we need two well known
properties of the inner product.

Let $H_1$ and $H_2$ two Hilbert spaces, then for any two pairs $a,b\in
H_1$ and $c,d\in H_2$, we have these identities.
\begin{itemize}
  \item {\bf Sum of inner products:}
\begin{equation}
  \braket{a,b}+\braket{c,d}=\braket{a \oplus c,b \oplus d},
\end{equation}
where $a \oplus c,b \oplus d\in H_1\oplus H_2$, $\oplus$ denotes the
direct sum of the Hilbert spaces. In the finite dimensional case the
direct sum means the concatenation of the corresponding vectors.  
\item {\bf Product of inner products:}
\begin{equation}
  \braket{a,b}\braket{c,d}=\braket{a \otimes c,b \otimes d},
\end{equation}
where $a \otimes c,b \otimes d\in H_1\otimes H_2$, $\otimes$ denotes the
tensor product of the Hilbert spaces. In the finite dimensional case the
tensor product means the outer product of the corresponding vectors.  
\end{itemize}

Additionally, since $\mu$ is a counting  measure we have
\begin{equation}
  \begin{array}{l}
    \mu(A\cap B)=\braket{\mathbb{I}(R),\mathbb{I}(S)}_{\mu}\quad
    \text{and}\quad 
    \mu(\overline{A}\cap \overline{S})
    =\braket{\mathbb{I}(\overline{A}),\mathbb{I}(\overline{B})}_{\mu}, 
  \end{array}
\end{equation}
where $\mathbb{I}()$ is the indicator function of the subsets of
$\mathcal{D}$, thus it can be represented via a binary vector.
Now based on the two above mentioned  properties of the inner product
Expression (\ref{eq_tanimoto_unfold}) can be decomposed into
an explicit feature such that 
\begin{equation}
  \begin{array}{@{}l@{}l@{}}
    \phi(\mathbb{I}(A))& =\frac{1}{\sqrt{\mu(\mathcal{D})}}
      \left( \mathbb{I}(A)
      \oplus \left[\mathbb{I}(A)
      \otimes \frac{\mathbb{I}(\overline{A})}{\sqrt{\mu(\mathcal{D})}}\right]
      \oplus \left[\mathbb{I}(A)\otimes (\otimes^2
      \frac{\mathbb{I}(\overline{A}))}{\sqrt{\mu(\mathcal{D})}}\right] 
      \oplus \dots  \right) \\
    &= \frac{1}{\sqrt{\mu(\mathcal{D})}} \left( \bigoplus_{k=1}^{\infty} \mathbb{I}(A)
      \bigotimes \otimes_{l=1}^{k-1}\frac{\mathbb{I}(\overline{A})}{\sqrt{\mu(\mathcal{D})}} \right),  
  \end{array}
\end{equation}
for all $A\subseteq \mathcal{D}$.

\section{Tanimoto kernel from general other type of kernels}

Let $\mathcal{S}=\{x_i|i=1,\dots,m\}$ be a sample of a set
$\mathcal{X}$, and we are given a feature representation $\phi:
\mathcal{X} \rightarrow \mathcal{H}$ in a Hilbert space $\mathcal{H}$. The
kernel function in $\mathcal{H}$ is denoted by $\kappa_{H}$.
 Let $\mathcal{B}$ be a set of basis elements of
$\mathcal{X}$. $\mathcal{B}$ which might be created by randomly
subsampling $\mathcal{S}$. We can
construct a basis relative feature vector for every $x\in \mathcal{X}$, by
\begin{equation}
\phi[B](x)=(\kappa_{H}(x,b_k)| b_k \in \mathcal{B}).   
\end{equation}
For any to elements $u$ and $v$ of $\mathcal{X}$ the basis relative
minimum and maximum function can be naturally defined as
\begin{equation}
\begin{array}{ll}
\min[\mathcal{B},\kappa_{H}](u,v)& 
= (\min(\kappa_{H}(u,b_k),\kappa_{H}(v,b_k))|b_k\in \mathcal{B}), \\     
\max[\mathcal{B},\kappa_{H}](u,v)& 
= (\max(\kappa_{H}(u,b_k),\kappa_{H}(v,b_k))|b_k\in \mathcal{B}).      
\end{array}
\end{equation}
By the help of these definitions we can write up the general Tanimoto kernel on
the top of kernel function $\kappa_{H}$.
\begin{equation}
\begin{array}{ll}
\label{eq:general_minmax_basis_tanimoto_kernel}
\kappa_{tanimoto}[\mathcal{B},\kappa_{H}](u,v) \\
\quad =\dfrac{
\sum \limits_{\min[\mathcal{B},\kappa_{H}](u_k,v_k)_k\ge 0} 
\min[\mathcal{B},\kappa_{H}](u_k,v_k)_k   
 - \sum \limits_{\max[\mathcal{B},\kappa_{H}](u_k,v_k)_k<0} 
\max[\mathcal{B},\kappa_{H}](u_k,v_k)_k   
}
{
\sum \limits_{\max[\mathcal{B},\kappa_{H}](u_k,v_k)_k\ge 0} 
\max[\mathcal{B},\kappa_{H}](u_k,v_k)_k   
- \sum \limits_{\min[\mathcal{B},\kappa_{H}](u_k,v_k)_k<0} 
\min[\mathcal{B},\kappa_{H}](u_k,v_k)_k.   
}  
\end{array}
\end{equation}

\section{Representation via quotient of piecewise linear functions}  
 
In this section we construct an additional representation of general
Tanimoto kernel for vectors with real components. This representation
is built on a quotient of piecewise linear functions. 
 
We are given a pair of arbitrary vectors $\mbf{a},\mbf{x} \in \mathbb{R}^{n}$. Let 
\begin{equation}
F(\mbf{a},\mbf{x})=\frac{1}{2}(|\mbf{a}|_{1}+|\mbf{x}|_{1}-|\mbf{a}-\mbf{x}|_{1}),
\end{equation}
and 
\begin{equation}
G(\mbf{a},\mbf{x})=\frac{1}{2}(|\mbf{a}|_{1}+|\mbf{x}|_{1}+|\mbf{a}-\mbf{x}|_{1}). 
\end{equation} 
Then we can write the general Tanimoto kernel in this form 
\begin{equation}
\kappa_{tanimoto}(\mbf{a},\mbf{x})
=\dfrac{F(\mbf{a},\mbf{x})}{G(\mbf{a},\mbf{x})}.    
\end{equation}
$F$ and $G$  can also be written as sums, 
\begin{equation}
F(\mbf{a},\mbf{x})=\frac{1}{2}\sum \limits_{j=1}^{n} (F(\mbf{a},\mbf{x}))_j
=\frac{1}{2}\sum \limits_{j=1}^{n}|a_j|+|x_j|-|a_j-x_j|,
\end{equation} 
and similarly 
\begin{equation}
G(\mbf{a},\mbf{x})=\frac{1}{2}\sum \limits_{j=1}^{n} (G(\mbf{a},\mbf{x}))_j
=\frac{1}{2}\sum \limits_{j=1}^{n}|a_j|+|x_j|+|a_j-x_j|. 
\end{equation} 


Let $\mbf{a}$ be fixed and $\kappa_{tanimoto}(\mbf{a},\mbf{x})$ is
expressed as function of $\mbf{x}$. To construct that function,   
we partition the index set $\{1,\dots n\}$ by
the relations comparing the values of $a_j$ and the varying $x_j$ for
any $j=1,\dots,n$. The subsets forming the partition are defined by  
\begin{equation}
\begin{array}{lll}
          & I^{+}_{x_j < 0} & =\{ j | x_j < 0 \}, \\
a_j \ge 0 & I^{+}_{0 \le x_j < a_j}&=\{ j | 0 \le x_j < a_j \}, \\
          & I^{+}_{a_j \le x_j}& =\{ j | a_j \le x_j\}, \\ \hline 
          & I^{-}_{x_j < a_j}&=\{ j | x_j < a_j \}, \\
a_j < 0 & I^{-}_{a_j \le x_j < 0}&=\{ j | a_j \le x_j < 0 \}, \\
          & I^{-}_{0 \le x_j }& =\{ j | 0 \le x_j \}. \\
\end{array}
\end{equation}  
If $\mbf{a},\mbf{x} \in \mathbb{R}^{n}_{+}$, then only $I^{+}_{0 \le x_j < a_j}$ and
$I^{+}_{a_j \le x_j}$ can be nonempty.   

Based on the definition of $F(\mbf{a},\mbf{x})$ and $G(\mbf{a},\mbf{x})$
we can compute the values of $(F(\mbf{a},\mbf{x}))_j$ and
$(G(\mbf{a},\mbf{x}))_j$ for every $j= 1,\dots n$.
\begin{equation}
\begin{array}{@{}l@{\:}|ccc|ccc|@{}}
& \multicolumn{3}{c|}{a_j\ge 0} & \multicolumn{3}{c}{a_j< 0} \\ \hline
j \in & I^{+}_{x_j < 0} & I^{+}_{0 \le x_j < a_j} & I^{+}_{a_j \le x_j}    
& I^{-}_{x_j < a_j} & I^{-}_{a_j \le x_j < 0} & I^{-}_{0 \le x_j} \\ \hline 
(F(\mbf{a},\mbf{x}))_j= & 0 & x_j & a_j & -a_j & -x_j & 0 \\ \hline    
(G(\mbf{a},\mbf{x}))_j= & a_j-x_j & a_j & x_j & -x_j & -a_j & -a_j+x_j \\ \hline    
\end{array}
\end{equation}

After summing up the components, $F$ is given by 
\begin{equation}
\begin{array}{ll}
F(\mbf{a},\mbf{x}) & = \sum \limits_{ j \in I^{+}_{0 \le x_j < a_j}} x_j 
+ \sum \limits_{ j \in I^{+}_{a_j \le x_j }} a_j
- \sum \limits_{j \in I^{-}_{x_j < a_j}} a_j  
- \sum \limits_{j \in I^{-}_{a_j \le x_j < 0}} x_j, 
\end{array}
\end{equation}
and we have similarly for $G$
\begin{equation}
\begin{array}{ll}
G(\mbf{a},\mbf{x}) & = \sum \limits_{ j \in I^{+}_{x_j < 0 }} (a_j -x_j)
+\sum \limits_{ j \in I^{+}_{0 \le x_j < a_j}} a_j 
+ \sum \limits_{ j \in I^{+}_{a_j \le x_j }} x_j \\
& \quad - \sum \limits_{j \in I^{-}_{x_j < a_j}} x_j  
- \sum \limits_{j \in I^{-}_{a_j \le x_j < 0}} a_j 
- \sum \limits_{j \in I^{-}_{0 \le x_j}} (a_j-x_j). 
\end{array}
\end{equation}

The function $F$ and $G$ are polyhedral, piecewise linear
functions. In the interior of every linear segments they can
be differentiated. The non-differentiable boundaries of the linear segments
are given by those points where at least for one index $j$, $x_j=0$ or
$x_j=a_j$.

\section{Smooth approximation} 

In some application the piecewise differentiability of the general Tanimoto
kernel in both forms,   
(\ref{eq:general_tanimoto_kernel}) or 
(\ref{eq:general_minmax_tanimoto_kernel}), could cause some problems. Here we
present a potential approximation schemes to overcome on this limitation.

\subsection{Smooth approximation of the minimum and maximum functions}   

The smooth approximation of the generalized Tanimoto kernel can be
constructed from the {\it quasi-arithmetic mean}. That concept is also
called as {\it generalized f-mean} or {\it Kolmogorov mean}.  It was
introduced by \cite{AKolmogorov1930}, and later on several extension,
additional properties and applications are published, see
for examples \cite{JBibby1974}, \cite{JAczel1989}. The basic notion
can be derived from {\it Kolmogorov expected value }. Let $f:
\mathbb{R}\subseteq \mathbb{R} \rightarrow \mathbb{R}$ be a
continuous, monotone function, and $X$ is a real valued random variable with
probability distribution function $F$. The  
Kolmogorov expected value is given by  
\begin{equation} 
E_{f}(X)=f^{-1}\left(\: \int \limits_{x \in \mathcal{D}} f(x) dF(x)\right),
\end{equation}
where $f^{-1}$ is the inverse function of $f$. The quasi-arithmetic
mean is a sample based estimation of $E_{f}(X)$. Assume a set
$\mathcal{S}=\{x_1,\dots, x_n\}$ of 
examples  taken from the random variable $X$, then we can 
write up the mean
\begin{equation}
M_{f}(X)=f^{-1}\left( \frac{1}{n} \sum_{i=1}^{n} f(x_i) \right).
\end{equation}
Some basic examples could demonstrate the background of this kind of
concept of the mean value. 
\begin{equation}
\begin{array}{lll}
\renewcommand{\arraystretch}{1.5}
f(x)=x, & \mathcal{S}\subset \mathbb{R}, &  \text{Arithmetic mean}, \\
f(x)=log(x), & \mathcal{S}\subset \mathbb{R}_{+}, &  \text{Geometric mean}, \\
f(x)=\frac{1}{x}, & \mathcal{S}\subset \mathbb{R}_{+}, 
&  \text{Harmonic mean}, \\
f(x)=x^{p}, & \mathcal{S}\subset \mathbb{R}_{+}, 
& \text{Power(generalized) mean}.  
\end{array}
\renewcommand{\arraystretch}{1}
\end{equation}
There are several characteristic properties of these expected value and mean
concepts, \cite{AKolmogorov1930}, \cite{JBibby1974},
\cite{JAczel1989}. For our purpose, the most important property is
that which claims, the 
quasi-arithmetic mean is between the minimum and the maximum value of
the sample. At a proper choice of the function $f$ both the minimum
and the maximum can be approximated by a sample with an arbitrary
small error.   
   
Let $t>0$, and $f(x)=t^{x}$, then $M_{f}(X)=\log_t \left( \frac{1}{n}
  \sum_{i=1}^{n} t^{x_i} \right)$. By taking the following limits we
can approximate the minimum and maximum.      
\begin{equation}
\begin{array}{ll}
\lim_{t\rightarrow \infty} & M_{f}((a,b))  = \max(a,b), \\
\lim_{t\rightarrow -\infty} & M_{f}(a,b) = \min(a,b). \\
\end{array}
\end{equation}
Note the the convergence remains valid if the $\frac{1}{n}$ is
dropped, and only the sum of the $f(x_i)$ elements are considered. 
$M_{f}(X)$ can also be written as $\frac{1}{t}\log \left( \frac{1}{n}
  \sum_{i=1}^{n} e^{t\: x_i} \right)$. 
   
Let $\mbf{a},\mbf{b}\in \mathbb{R}^{n}$ be vectors, the generalized Tanimoto
kernel on these vectors is given by 
\begin{equation}
\begin{array}{@{}ll@{}}
\kappa_{tanimoto}(\mbf{a},\mbf{b}) =\dfrac{
\sum \limits_{\min(a_j,b_j)\ge 0} 
\min(a_j,b_j)   
- \sum \limits_{\max(a_j,b_j)<0} 
\max(a_j,b_j) 
}
{\sum \limits_{\max(a_j,b_j)\ge 0} 
\max(a_j,b_j)   
 -\sum \limits_{\min(a_j,b_j)<0} 
\min(a_j,b_j)
}.   
\end{array}
\end{equation}
It can be reformulated to drop the inequalities from the summation
\begin{equation}
\begin{array}{@{}ll@{}}
\kappa_{tanimoto}(\mbf{a},\mbf{b}) =\dfrac{
\sum_{j=1}^{n} 
\max(\min(a_j,b_j),0)   
- \sum_{j=1}^{n} 
\min(\max(a_j,b_j),0) 
}
{\sum_{j=1}^{n} 
\max(\max(a_j,b_j),0)   
 -\sum_{j=1}^{n} 
\min(\min(a_j,b_j),0)
}.   
\end{array}
\end{equation}
Now we can apply the smooth approximations of the minimums and
maximums appearing in the expression by choosing a sufficiently large
positive value for $t$. 
\begin{equation}
\renewcommand{\arraystretch}{1.5}
\begin{array}{@{}ll@{}}
\max(\min(a_j,b_j),0)
\approx \frac{1}{t}\log(e^{t}+e^{-\log(e^{-ta_j}+e^{-tb_j})}) \\   
\min(\max(a_j,b_j),0) 
\approx -\frac{1}{t}\log(e^{-t}+ e^{-\log(e^{ta_j}+e^{tb_j})}) \\   
\max(\max(a_j,b_j),0)   
\approx \frac{1}{t}\log(e^{t}+e^{\log(e^{ta_j}+e^{tb_j})}) \\   
\min(\min(a_j,b_j),0)
\approx -\frac{1}{t}\log(e^{-t}+ e^{\log(e^{-ta_j}+e^{-tb_j})}) \\   
\end{array}
\renewcommand{\arraystretch}{1}
\end{equation}

Finally the entire approximation of the generalized Tanimoto kernel
has the following form.
\begin{equation}
\begin{array}{@{}ll@{}}
\kappa_{tanimoto}(\mbf{a},\mbf{b}) =\dfrac{
\sum_{j=1}^{n} \log(e^{t}+e^{-\log(e^{-ta_j}+e^{-tb_j})})
+ \sum_{j=1}^{n} \log(e^{-t}+ e^{-\log(e^{ta_j}+e^{tb_j})})
}
{\sum_{j=1}^{n}\log(e^{t}+e^{\log(e^{ta_j}+e^{tb_j})})   
 +\sum_{j=1}^{n} \log(e^{-t}+ e^{\log(e^{-ta_j}+e^{-tb_j})})
}.   
\end{array}
\end{equation}

\section{Experiments}

We evaluate the practical performance of the presented Tanimoto kernels using
multiple prediction tasks.

\subsection{LogP Prediction}

We downloaded a publicly available data
set\footnote{\url{https://cactus.nci.nih.gov/download/nci/ncidb.sdf.gz}} 
containing experimentally determined LogP values for 2671 unique
molecule structures (determined by their SMILES  
representation). The data set was provided by the National Cancer
Institute (NCI \url{https://www.cancer.gov/}). 
Using the SMILES representation of the molecules, we calculated the so
called E-state fingerprints using RDKit  
an Open-source cheminformatics library (\url{https://www.rdkit.org},
version 2019.03.4). E-state fingerprints  
where introduce by \cite{Hall1995} and encode the intrinsic electronic
state (therefore E-state) of 79 predefined 
molecular sub-substructures. The E-state not only depends on the
sub-structure definition it self, but also on 
its atomic neighborhood within the molecule. We can use the E-state
fingerprints in three different representations 
controlling the level of information: As real valued vector of 79
electronic states, as positive valued vector  
counting the predefined sub-structures, or as binary vector only
indicating the presence of the substructure.  

To assess the predictive performance of the generalized Tanimoto
kernel introduced in this work, we optimize three  
different Kernel Ridge Regression (KRR) models, one for each
representation of the E-state fingerprints. Each KRR  
model is evaluated using 3 times repeated 5-fold cross-validation
(CV). The optimal KRR regularization parameter  
is found using nested CV. The KRR prediction performance is shown in
Table~\ref{tab:logp_prediction_performance} and 
compared to a standard prediction model called XLOGP3 developed by
\cite{Cheng2007}. The same CV scheme as for  
the KRR is applied to train and evaluate the XLOGP3 model. It can be
see, that the real-valued fingerprints lead  
to the best KRR model. Its performance is very close to XLOGP3 model,
allowing competitive LogP predictions using 
the generalized Tanimoto kernel. An advantage of the kernel is, that
no further hyper-parameter needs to be optimized. 

\begin{table}[t]
    \centering
    \begin{tabular}{lcccc}
        \toprule
        {\bf Model} & {\bf MSE} & {\bf R$^2$} & {\bf Pearson} & {\bf Spearman} \\ \midrule
        KRR + Binary & 1.085 (0.070) & 0.678 (0.023) & 0.825 (0.013) & 0.797 (0.021) \\
        KRR + Count & 0.278 (0.032) & 0.917 (0.012) & 0.958 (0.006) & 0.951 (0.007) \\
        KRR + Real & 0.228 (0.020) & 0.932 (0.009) & 0.966 (0.005) & 0.960 (0.005) \\ \cmidrule(lr){1-5}
        XLOGP3 & 0.220 (0.021) & 0.935 (0.007) & 0.967 (0.004) & 0.961 (0.005) \\
        \bottomrule
    \end{tabular}
    \caption{LogP prediction performance using different models. The results 
    in the first three rows where achieved using Kernel Ridge Regression (KRR) with 
    different molecule representations and generalized Tanimoto kernel. The last row contains
    the results of the XLOGP3 LogP prediction model developed by \cite{Cheng2007}. All models
    where evaluated using 3-times 5-fold cross-validation and the average (standard deviation)
    performance is reported.}
    \label{tab:logp_prediction_performance}
\end{table}


\section*{Acknowledgments}

%

This work has been supported by the Academy of Finland grant 310107
(MACOME - Machine Learning for Computational Metabolomics).

\bibliography{tanimoto_bibliography.bib}
\bibliographystyle{unsrt}

\end{document}